
\documentclass[letterpaper, 10 pt, conference]{ieeeconf}  

\IEEEoverridecommandlockouts                              

\overrideIEEEmargins                                      

\usepackage[english]{babel}
\usepackage{ifpdf}

\usepackage{cite} 
\usepackage{url}

\usepackage{color}
\usepackage{pgf, tikz, pgfplots}
\usetikzlibrary{shapes, arrows, automata}
\usetikzlibrary{calc,hobby,decorations}

\usepackage[cmex10]{amsmath}
\usepackage{amsfonts, amssymb}
\usepackage{mathrsfs}


\usepackage{algorithm} 
\usepackage{algorithmic} 
\usepackage{amsmath} 
\usepackage{xcolor}
\usepackage{enumerate}
\usepackage{multirow}
\usepackage{rotating}
\usepackage{subcaption}
\captionsetup[sub]{font=footnotesize}
\captionsetup[figure]{font=small,labelsep=period,subrefformat=parens}

\hyphenation{op-tical net-works semi-conduc-tor}

\usepackage{needspace}





\input{mySymbol.sty}

\definecolor{penndarkestblue}{cmyk}{1,0.74,0,0.77}
\definecolor{penndarkerblue}{cmyk}{1,0.74,0,0.70}
\definecolor{pennblue}{cmyk}{0.99,0.66,0,0.57} 
\definecolor{pennlighterblue}{cmyk}{0.98,0.44,0,0.35}
\definecolor{pennlightestblue}{cmyk}{0.38,0.17,0,0.17} 

\definecolor{penndarkestred}{cmyk}{0,1,0.89,0.66}
\definecolor{penndarkerred}{cmyk}{0,1,0.88,0.55}
\definecolor{pennred}{cmyk}{0,1,0.83,0.42} 
\definecolor{pennlighterred}{cmyk}{0,1,0.6,0.24}
\definecolor{pennlightestred}{cmyk}{0,0.43,0.26,0.12} 

\definecolor{penndarkestgreen}{cmyk}{1,0,1,0.68}
\definecolor{penndarkergreen}{cmyk}{1,0,1,0.57}
\definecolor{penngreen}{cmyk}{1,0,1,0.44} 
\definecolor{pennlightergreen}{cmyk}{1,0,1,0.25}
\definecolor{pennlightestgreen}{cmyk}{0.43,0,0.43,0.13}

\definecolor{penndarkestorange}{cmyk}{0,0.65,1,0.49}
\definecolor{penndarkerorange}{cmyk}{0,0.65,1,0.33}
\definecolor{pennorange}{cmyk}{0,0.54,1,0.24} 
\definecolor{pennlighterorange}{cmyk}{0,0.32,1,0.13}
\definecolor{pennlightestorange}{cmyk}{0,0.15,0.46,0.06}

\definecolor{penndarkestpurple}{cmyk}{0,1,0.11,0.86}
\definecolor{penndarkerpurple}{cmyk}{0,1,0.13,0.82}
\definecolor{pennpurple}{cmyk}{0,1,0.11,0.71} 
\definecolor{pennlighterpurple}{cmyk}{0,1,0.05,0.46}
\definecolor{pennlightestpurple}{cmyk}{0,0.35,0.02,0.23}

\definecolor{pennyellow}{cmyk}{0,0.20,1,0.05} 
\definecolor{pennlightgray1}{cmyk}{0,0,0,0.05}
\definecolor{pennlightgray3}{cmyk}{0.01,0.01,0,0.18}
\definecolor{pennmediumgray1}{cmyk}{0.04,0.03,0,0.31}
\definecolor{pennmediumgray4}{cmyk}{0.08,0.06,0,0.54}
\definecolor{penndarkgray2}{cmyk}{0.09,0.07,0,0.71}
\definecolor{penndarkgray4}{cmyk}{0.1,0.1,0,0.92}

\def\SO3{\mathrm{SO(3)}}

\newtheorem{problem}{\hspace{0pt}\bf Problem}

\newtheorem{theorem}{\hspace{0pt}\bf Theorem}




\makeatletter
\let\NAT@parse\undefined
\makeatother
\usepackage{hyperref}

\title{\LARGE \bf
Environment Optimization for Multi-Agent Navigation
}

\author{Zhan Gao and Amanda Prorok
\thanks{Department of Computer Science and Technology, University of Cambridge
        {\tt\small zg292@cam.ac.uk, asp45@cam.ac.uk}. This work was supported by European Research Council (ERC) Project 949940 (gAIa).}%
    }

\linespread{0.95}
\begin{document}

\maketitle
\thispagestyle{empty}
\pagestyle{empty}

\begin{abstract}

Traditional approaches to the design of multi-agent navigation algorithms consider the environment as a fixed constraint, despite the obvious influence of spatial constraints on agents' performance. Yet hand-designing improved environment layouts and structures is inefficient and potentially expensive. The goal of this paper is to consider the environment as a decision variable in a system-level optimization problem, where both agent performance and environment cost can be accounted for. We begin by proposing a novel environment optimization problem. We show, through formal proofs, under which conditions the environment can change while guaranteeing completeness (i.e., all agents reach their navigation goals). Our solution leverages a model-free reinforcement learning approach. In order to accommodate a broad range of implementation scenarios, we include both online and offline optimization, and both discrete and continuous environment representations. Numerical results corroborate our theoretical findings and validate our approach.

\end{abstract}


\section{INTRODUCTION}

Multi-agent systems present an attractive solution to spatially distributed tasks, wherein motion planning among moving agents and obstacles is one of the central problems. To date, the primal focus in multi-agent motion planning has been on developing effective, safe, and near-optimal navigation algorithms \cite{silver2005cooperative, van2008reciprocal,van2005prioritized,desaraju2012decentralized, standley2011complete}. These algorithms consider the \textit{agents' environment as a fixed constraint}, where structures and obstacles must be circumnavigated; in this process, mobile agents engage in negotiations with one another for right-of-way, driven by local incentives to minimize individual delays. However, environmental constraints may result in dead-locks, live-locks and prioritization conflicts, even for state-of-the-art algorithms \cite{mani2010search}. These insights highlight the impact of the environment on multi-agent navigation.

Not all environments elicit the same kinds of agent behaviors and individual navigation algorithms are susceptible to environmental artefacts; undesirable environments can lead to irresolution in path planning \cite{ruderman_Uncovering_2018}. To deal with such bottlenecks, spatial structures (e.g., intersections, roundabouts) and markings (e.g., lanes) are developed to facilitate path de-confliction \cite{boudet2021collections} but these concepts are based on legacy mobility paradigms, which ignore inter-agent communication, cooperation, and systems-level optimization. While it is possible to deal with the circumvention of dead-locks and live-locks through hand-designed environment templates, such hand-designing process is inefficient~\cite{cap_Prioritized_2015}.

\begin{figure}%
	\centering
	\begin{subfigure}{0.49\columnwidth}
		\includegraphics[width=1.1\linewidth, height = 0.8\linewidth]{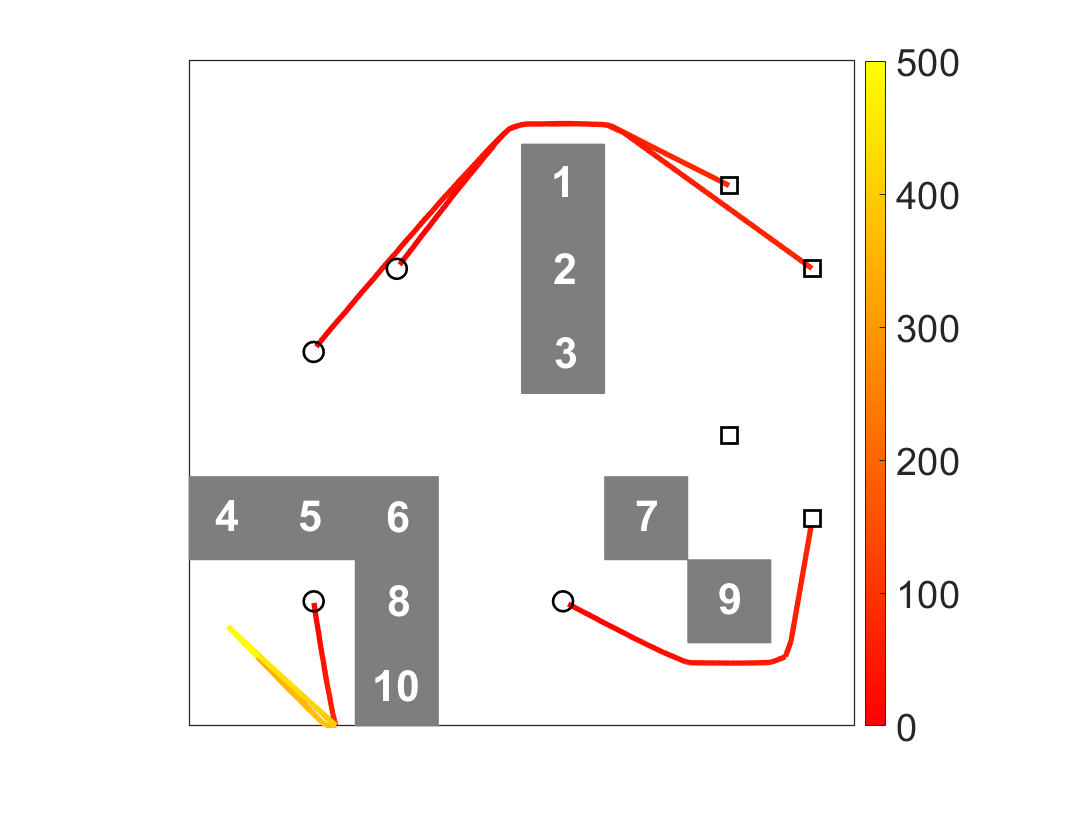}%
		\caption{}%
		\label{subfig_demo_before}%
	\end{subfigure}\hfill\hfill%
	\begin{subfigure}{0.49\columnwidth}
		\includegraphics[width=1.1\linewidth,height = 0.8\linewidth]{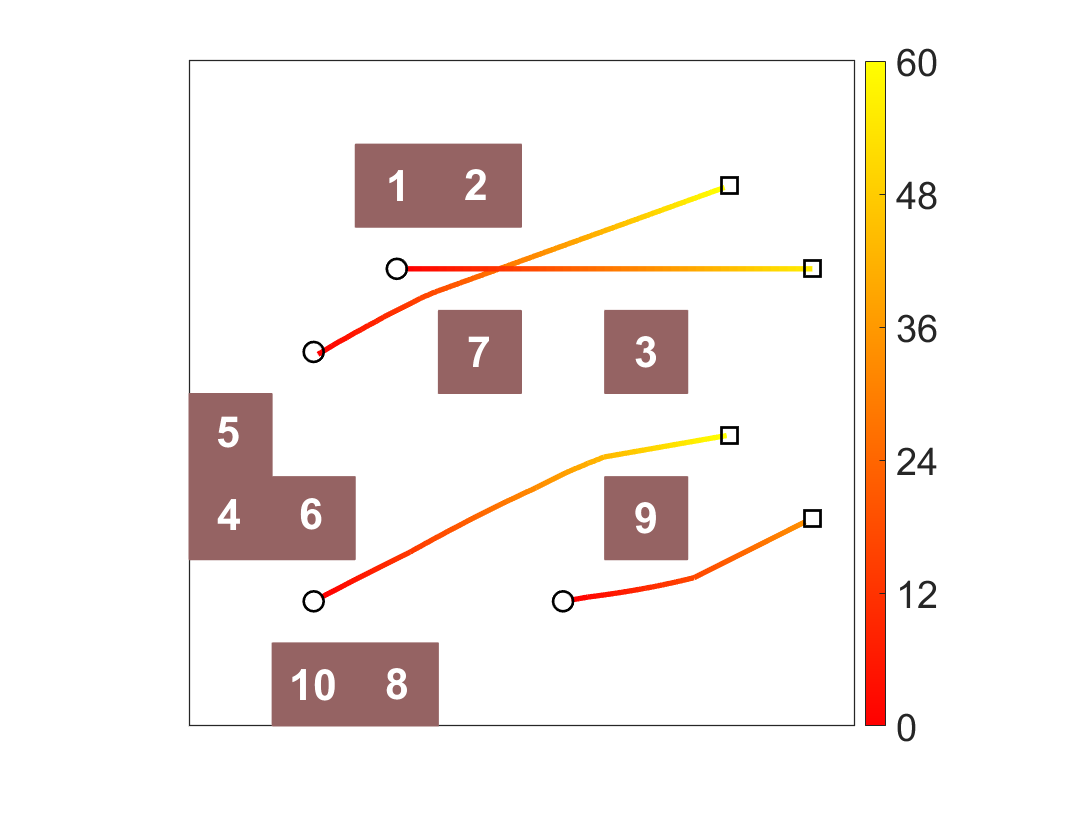}%
		\caption{}%
		\label{subfig_demo_after}%
	\end{subfigure}
	\caption{Example of offline environment optimization. Circles are the initial positions, squares are the destinations and the obstacles are numbered for exposition. The color lines from red to yellow are agent trajectories and the color bar represents the time scale. Obstacle layout and agent trajectories (a) \textit{before} environment optimization and (b) \textit{after} environment optimization.}\label{fig:offlineDemo}\vspace{-6mm}
\end{figure}

Reconfigurable, automated environments are emerging as a new trend~\cite{wang2010new, bier2014robotic, custodio2020flexible}. In tandem with that enabling technology, the goal of this paper is to consider the environment as a \emph{variable} in pursuit of the agents' incentives. We propose the problem of systematically \textit{optimizing an environment to improve the performance of a given multi-agent system}, an example of which is given in Fig. \ref{fig:offlineDemo}. More in detail, our contributions are as follows:
\begin{enumerate}[(i)]
	
	\item We first define the problem of environment optimization. We then develop two solution variants, i.e., the offline environment optimization and the online environment optimization.
	
	\item We analyze the completeness of multi-agent navigation with environment optimization, and identify the conditions under which all agents are guaranteed to reach their navigation goals.
	
	\item We develop a reinforcement learning-based method to solve the environment optimization problem, and integrate two variant information processing architectures (i.e., CNNs, GNNs) as a function of the problem setting. The proposed method is able to generalize beyond training instances, adapts to various optimization objectives, and allows for decentralized implementation. 
	
\end{enumerate}

\noindent \textbf{Related work.} The \textit{role of the environment} on motion planning has been explored by a handful of works~\cite{bennewitz2002finding, jager2001decentralized, vcap2015complete, wu_MultiRobot_2019, gur2021adversarial}. The works in \cite{bennewitz2002finding, jager2001decentralized} 
emphasize the existence of congestion and deadlocks in undesirable environments and develop trajectory planning methods to escape from potential deadlocks. The authors in~\cite{vcap2015complete} define the concept of ``well-formed'' environments, in which the navigation tasks of all agents can be carried out successfully without collisions nor deadlocks. In~\cite{wu_MultiRobot_2019}, Wu et al. show that the shape of the environment leads to distinct \emph{path prospects} for different agents, and that this information can be shared among the agents to optimize and coordinate their mobility. Gur et al. in \cite{gur2021adversarial} generate adversarial environments and account for the latter information to develop resilient navigation algorithms. However, none of these works consider optimizing the environment to improve system-wide navigation performance.

The problem of environment optimization is reminiscent of robot co-design \cite{tanaka2015sdp,tatikonda2004control,tzoumas2018sensing}, wherein sensing, computation, and control are jointly optimized. While this domain of robotics is \textit{robot-centric} (i.e., does not consider the environment as an optimization criteria), there are a few works that, similarly to our approach, use reinforcement learning to co-optimize objectives \cite{lipson2000automatic,hornby2003generative,cheney2018scalable}. A more closely related idea is exploited in \cite{ruderman_Uncovering_2018}, wherein the environment is adversarially optimized to fail navigation tasks of state-of-the-art multi-agent systems. It conducts worst-case analysis to shed light on directions in which more robust systems can be developed. On the contrary, we optimize the environment to facilitate multi-agent navigation tasks.

\section{PROBLEM DEFINITION}

Let $\ccalE$ be a $2$-D environment described by a starting region $\ccalS$, a destination region $\ccalD$ and an obstacle region $\Delta$ without overlap, i.e., $\ccalS \bigcap \ccalD = \ccalS \bigcap \Delta = \ccalD \bigcap \Delta = \emptyset$. Consider a multi-agent system with a set of agents $\ccalA = \{A_i\}_{i=1}^n$ distributed in $\ccalE$. The agent bodies are contained within circles of radius $\{r_i\}_{i=1}^n$. Agents are initialized at positions $\bbS = [\bbs_1,\ldots,\bbs_n]$ in $\ccalS$ and employ a given trajectory planner $\pi_a$ to move towards destinations $\bbD = [\bbd_1,\ldots,\bbd_n]$ in $\ccalD$. We make practical assumptions about $\pi_a$: (i) it is decentralized (can be executed locally with access to local information only), and (ii) in an empty space, it guarantees that all agents reach their destinations while remaining collision-free.\footnote{There are numerous trajectory planners that satisfy these assumptions, such as \cite{van2008reciprocal,van2005prioritized,desaraju2012decentralized}. In this work, we implement a planner based on \cite{van2008reciprocal} (RVO).}

Existing literature focuses on developing novel trajectory planners to improve navigation performance. However, the latter depends not only on the implemented planner but also on the surrounding environment $\ccalE$. A ``well-formed'' environment with an appropriate obstacle region yields good performance for a simple planner, while a ``poorly-formed'' environment may result in a poor performance even for an advanced planner. This insight implies an important role played by the environment in multi-agent navigation, and motivates the problem of environment optimization.
\begin{problem}[Environment Optimization]\label{def:environmenProblem}
	\emph{Given an environment with an initial obstacle region and a multi-agent system with $n$ agents that are required to reach $n$ destinations (in a labeled navigation problem), find a policy that optimizes the obstacle region to improve agents' path efficiency while guaranteeing that they reach their destinations.}
\end{problem}

In Section \ref{sec:Completeness}, we analyze the completeness of multi-agent navigation with environment optimization. In Section \ref{sec:Methodology}, we formulate Problem \ref{def:environmenProblem} mathematically and leverage reinforcement learning for solutions. Lastly, we present numerical simulations to evaluate the approach in Section \ref{sec:Methodology}.

\section{COMPLETENESS ANALYSIS}\label{sec:Completeness}

The multi-agent system may fail to find collision-free (and deadlock-free) trajectories in an environment with an unsatisfactory obstacle region. Environment optimization overcomes this issue by optimizing the layout and guaranteeing  navigation success under a mild condition. We propose two variants: the \emph{offline} and the \emph{online} environment optimization.  

\subsection{Offline Environment Optimization}

Offline environment optimization optimizes the obstacle region $\Delta$ based on the starting region $\ccalS$, the destination region $\ccalD$ and completes the optimization before the agents start to move. The optimized environment remains fixed during agent movement. We introduce some necessary notation for theoretical analysis. Let $\partial \ccalS$, $\partial \ccalD$, $\partial\Delta$ be the boundaries of the regions $\ccalS$, $\ccalD$, $\Delta$ and $B(\bbs,r)$ be a closed disk centered at position $\bbs$ with a radius $r$. We can represent each agent $A_i$ as the disk $B(\bbs_i, r_i)$ where $\bbs_i$ is the central position of the agent. Define $d(\bbs_i, \bbs_j)$ as the closest distance between agents $A_i$, $A_j$, i.e., $d(\bbs_i, \bbs_j) = \min \|\bbz_i - \bbz_j\|_2$ for any $\bbz_i \in B(\bbs_i, r_i)$ and $\bbz_j \in B(\bbs_j, r_j)$, and $d(\bbs_i, \partial \ccalS)$ the closest distance between agent $A_i$ and the region boundary $\partial \ccalS$, i.e., $d(\bbs_i, \partial \ccalS) = \min \|\bbz_i - \bbz_\ccalS\|_2$ for any $\bbz_i \in B(\bbs_i, r_i)$ and $\bbz_\ccalS \in \partial \ccalS$. Similar definitions apply for $\partial\ccalD$ and $\partial\Delta$. 

Our analysis assumes the following setup. The initial positions $\{\bbs_i\}_{i=1}^n$ in $\ccalS$ and the destinations $\{\bbd_i\}_{i=1}^n$ in $\ccalD$ are distributed in a way such that the distance between either two agents or the agent and the region boundary is larger than the maximal agent size, i.e., $d(\bbs_i,\bbs_j) \ge 2 \hat{r}$, $d(\bbs_i,\partial \ccalS) \ge 2 \hat{r}$ and $d(\bbd_i,\bbd_j) \ge 2 \hat{r}$, $d(\bbd_i,\partial \ccalD) \ge 2 \hat{r}$ for $i,j=1,\ldots,n$ with $\hat{r} = \max_{i=1,\ldots,n} r_i$ the maximal agent radius. The obstacle region $\Delta$ can be controlled / changed \textbf{but its area $|\Delta|$ is constant}, i.e., $|\Delta| = |\widetilde{\Delta}|$ where $\widetilde{\Delta}$ is the changed obstacle region. That is, the obstacle region (or any part of it) cannot be removed from the environment. Each agent employs a given trajectory planner $\pi_a$ and any trajectory generated for an agent will be followed precisely. Denote by $\bbp(t): [0,\infty) \to \mathbb{R}^2$ the trajectory representing the central position movement of an agent. The trajectory $\bbp_i$ of agent $A_i$ is collision-free w.r.t. $\Delta$ if $d\big(\bbp_i(t), \partial \Delta \big) \ge r_i$ for all $t\ge 0$. The trajectories $\bbp_i$, $\bbp_j$ of two agents $A_i$, $A_j$ are collision-free if $\|\bbp_i(t) - \bbp_j(t)\|_2 \ge r_i + r_j$ for all $t \ge 0$. With these preliminaries, we show the completeness in the following.

\begin{theorem}\label{thm:offlineCompleteness}
	\emph{Consider the multi-agent system in the environment $\ccalE$ with starting, destination and obstacle regions $\ccalS$, $\ccalD$ and $\Delta$. Let $d_{\max}$ be the maximal distance between $\ccalS$ and $\ccalD$, i.e., $d_{\max} = \max_{\bbz_\ccalS,\bbz_\ccalD} \| \bbz_\ccalS - \bbz_\ccalD \|_2$ for any $\bbz_\ccalS \in \ccalS,\bbz_\ccalD \in \ccalD$. 
	Then, if the environment $\ccalE$ satisfies
	\begin{align}\label{eq:offlineCompletenessCondition}
		|\ccalE \setminus (\Delta \cup \ccalS \cup \ccalD)| \ge 2 n d_{\max} \hat{r}
	\end{align}
	where $\hat{r} = \max_{i=1,\ldots,n} r_i$ is the maximal radius of $n$ agents and $|\cdot|$ represents the region area, the environment optimization guarantees that the navigation tasks of all agents will be carried out successfully without collision.} 
\end{theorem}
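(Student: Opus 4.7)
The plan is to prove the theorem by an explicit rearrangement argument: I show that $\Delta$ can be reshaped so that the free region $\ccalE\setminus(\Delta\cup\ccalS\cup\ccalD)$ contains $n$ pairwise disjoint ``corridors'' $T_1,\ldots,T_n$, where each $T_i$ is a tube of width at least $2\hat r$ connecting the neighborhood of $\bbs_i$ on $\partial\ccalS$ to the neighborhood of $\bbd_i$ on $\partial\ccalD$. Once such corridors are in place, agent $A_i$ runs the given planner $\pi_a$ inside its own corridor $T_i$, which is locally empty space of sufficient width. By the empty-space completeness assumption on $\pi_a$, agent $A_i$ reaches $\bbd_i$; disjointness of the corridors and the fact that they lie entirely in the free region rule out collisions with other agents and with $\Delta$.

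The area bookkeeping is the routine step. Each corridor $T_i$ can be taken as a straight-line tube of width $2\hat r$ and length at most $d_{\max}$, hence of area at most $2\hat r\, d_{\max}$. Summed over $n$ agents, the total corridor area is at most $2n\hat r\, d_{\max}$, which by hypothesis \eqref{eq:offlineCompletenessCondition} is at most the free-space area. Since environment optimization preserves $|\Delta|$ and leaves $\ccalS$, $\ccalD$ and $\ccalE$ unchanged, the total free-space area is invariant under any admissible rearrangement, so the budget comparison is well-posed. The spacing assumptions $d(\bbs_i,\bbs_j)\ge 2\hat r$, $d(\bbs_i,\partial\ccalS)\ge 2\hat r$ (and their $\ccalD$-side analogues) further ensure that the corridor endpoints on $\partial\ccalS$ and $\partial\ccalD$ are pairwise $2\hat r$-separated, so $n$ disjoint entry/exit mouths exist.

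The main obstacle is the geometric realization step, namely producing $n$ \emph{mutually non-intersecting} corridors simultaneously, since for arbitrary matchings $\bbs_i\mapsto\bbd_i$ the straight-line tubes may cross. I plan to handle this with a planar non-crossing routing argument: index the endpoints cyclically along $\partial\ccalS$ and $\partial\ccalD$ and iteratively peel off the outermost endpoint pair whose tube can be drawn without enclosing any remaining terminals, while relocating any $\Delta$-mass lying in the chosen tube into the unused portion of the free region. The $2n\hat r\, d_{\max}$ budget absorbs the extra length introduced by the detours required by non-crossing, and invariance of $|\Delta|$ guarantees that the obstacles displaced from the corridor footprints still fit in the residual free area. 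Combining this construction with the planner-level argument of the first paragraph yields the desired completeness conclusion.
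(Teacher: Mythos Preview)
Your approach diverges from the paper's and has a genuine gap at precisely the point you flag as ``the main obstacle.'' The paper does \emph{not} attempt to build $n$ pairwise disjoint corridors. Instead it only constructs, for each $i$, a path $\bbp_{\bbs_i}^{\bbd_i}$ that is collision-free with respect to the obstacle region and the \emph{other endpoints} $\{\bbs_j,\bbd_j\}_{j\ne i}$; these paths are allowed to overlap one another freely. That is exactly the ``well-formed infrastructure'' condition of \v{C}\'ap et al., and the paper then simply invokes their completeness theorem for prioritized/decentralized planners in well-formed environments. Because the $n$ straight tubes need not be disjoint, the union of their footprints has area at most $\sum_i 2\hat r\,d_{\max}=2n\hat r\,d_{\max}$, which is the budget in \eqref{eq:offlineCompletenessCondition}, and the obstacle mass can be pushed outside that union.

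Your disjoint-corridor plan breaks at the area accounting. For a \emph{fixed labeled} matching $\bbs_i\mapsto\bbd_i$, non-crossing routes generally force detours, and there is no reason the total routed length stays bounded by $n\,d_{\max}$; a reversed permutation between two parallel boundary arcs already produces nested routes whose total length scales superlinearly in $n$. The sentence ``the $2n\hat r\,d_{\max}$ budget absorbs the extra length introduced by the detours'' is the unproved step, and it is false in general. There is also a width obstruction: $n$ disjoint tubes of width $2\hat r$ passing through any common cross-section require transverse room $2n\hat r$, which the environment need not provide. Finally, even if you had the corridors, ``run $\pi_a$ inside its own corridor'' is not licensed by the stated assumption on $\pi_a$, which concerns an \emph{empty} ambient space; a width-$2\hat r$ tube has walls that act as obstacles. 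The paper sidesteps all of this by delegating agent--agent deconfliction to the well-formedness theorem rather than to spatial separation.
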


\begin{proof}
	We prove the theorem as follows. First, we optimize $\Delta$ such that the environment is ``well-formed'', i.e., any initial position in $\ccalS$ and destination in $\ccalD$ can be connected by a collision-free path. Then, we show the optimized environment guarantees the success of all navigation tasks. 
	
	\smallskip
	\noindent \textbf{Obstacle region optimization.} We first optimize $\Delta$ based on $\ccalS$, $\ccalD$ to make the environment "well-formed". To do so, we handle $\ccalS$, $\ccalD$ and the remaining space $\ccalE \setminus (\ccalS \bigcup \ccalD)$ separately. 
	
	\emph{(i)} The initial positions $\{\bbs_i\}_{i=1}^n$ in $\ccalS$ are distributed such that $d(\bbs_i,\bbs_j) \ge 2 \hat{r}$ and $d(\bbs_i,\partial \ccalS) \ge 2 \hat{r}$. Thus, for any $\bbs_i$, there exists a boundary point $\partial \bbs_i \in \partial \ccalS$ and a path $\bbp_{\bbs_i}^{\partial \bbs}$ connecting $\bbs_i$ and $\partial \bbs_i$ that is collision-free w.r.t. the other initial positions. A similar result applies to the destinations $\{\bbd_i\}_{i=1}^n$ in $\ccalD$, i.e., for any $\bbd_i$, there exists a boundary point $\partial \bbd_i \in \partial \ccalD$ and a path $\bbp_{\partial \bbd_i}^{\bbd_i}$ connecting $\partial \bbd_i$ and $\bbd_i$ that is collision-free w.r.t. the other destinations.
	
	\emph{(ii)} Consider $\partial \bbs_i$ and $\partial \bbd_i$ for agent $A_i$. The shortest path $\bbp_{\partial \bbs_i}^{\partial \bbd_i}$ that connects them is the straight path, the area of which is bounded as $|\bbp_{\partial \bbs_i}^{\partial \bbd_i}| \le  2 \| \partial \bbs_i - \partial \bbd_i\|_2\hat{r}\le 2 d_{\max}\hat{r}$ because $d_{\max}$ is the maximal distance between $\ccalS$ and $\ccalD$. From $|\ccalE \setminus (\Delta \cup \ccalS \cup \ccalD)| \ge 2 n d_{\max} \hat{r}$ in \eqref{eq:offlineCompletenessCondition}, the area of the obstacle-free space in $\ccalE \setminus (\ccalS \bigcup \ccalD)$ is larger than $2 n d_{\max} \hat{r}$. Thus, we can always optimize $\Delta$ to $\Delta^*$ such that the path $\bbp_{\partial \bbs_i}^{\partial \bbd_i}$ is obstacle-free. If $\bbp_{\partial \bbs_i}^{\partial \bbd_i}$ dose not overlap with $\ccalS$ and $\ccalD$, we can connect $\partial \bbs_i$ and $\partial \bbd_i$ with $\bbp_{\partial \bbs_i}^{\partial \bbd_i}$ directly. If $\bbp_{\partial \bbs_i}^{\partial \bbd_i}$ passes through $\ccalS$ $K$ times, e.g., for $K$ distinct paths, let $\bbs_{i,e}^{(k)}$ and $\bbs_{i,l}^{(k)}$ be the entering and leaving positions of $\bbp_{\partial \bbs_i}^{\partial \bbd_i}$ on $\ccalS$ at $k$th pass for $k=1,...,K$ with $\bbs_{i,l}^{(0)} = \partial \bbs_i$ the initial leaving position. First, we can connect $\bbs_{i,l}^{(k-1)}$ and $\bbs_{i,e}^{(k)}$ by $\bbp_{\partial \bbs_i}^{\partial \bbd_i}$ because $\bbp_{\partial \bbs_i}^{\partial \bbd_i}$ is obstacle-free. Then, there exists a collision-free path $\bbp_i^{(k)}$ inside $\ccalS$ that connects $\bbs_{i,e}^{(k)}$ and $\bbs_{i,l}^{(k)}$ as described in (i). The same result applies so that $\bbp_{\partial \bbs_i}^{\partial \bbd_i}$ passes through $\ccalD$. Therefore, we can connect $\partial \bbs_i$ and $\partial \bbd_i$ with $\bbp_{\partial \bbs_i}^{\partial \bbd_i}$ and $\{\bbp_i^{(k)}\}_{k=1}^K$. 
	
	By concatenating $\bbp_{\bbs_i}^{\partial \bbs_i}$, $\bbp_{\partial \bbs_i}^{\partial \bbd_i}$, $\{\bbp_i^{(k)}\}_{k=1}^K$ and $\bbp_{\partial \bbd_i}^{\bbd_i}$, we can establish a path $\bbp_{\bbs_i}^{\bbd_i}$ connecting $\bbs_i$ to $\bbd_i$ that is collision-free w.r.t. the other initial positions, destinations and the optimized obstacle region $\Delta^*$ for $i=1,...,n$, i.e., the optimized environment is ``well-formed''.
	
	\smallskip
    \noindent \textbf{Completeness.} With the fact that the optimized environment is ``well-formed'', by Theorem 4 in \cite{vcap2015complete}, we complete the proof that all navigation tasks can be carried out successfully under both centralized as well as decentralized (e.g., priority-based) planners. 
\end{proof}

Theorem \ref{thm:offlineCompleteness} states offline environment optimization guarantees the success of all navigation tasks under a mild condition. It does not require any initial ``well-formed'' environment but only a small obstacle-free area [cf. \eqref{eq:offlineCompletenessCondition}], which is commonly satisfied in real-world scenarios. The offline environment optimization depends on the given starting region $\ccalS$ and destination region $\ccalD$, and completes optimizing the obstacle region \emph{before} the agents start to move. This requires a computational overhead before each new navigation task. Moreover, we are interested in generalizing the problem s.t. $\ccalS$ and $\ccalD$ are allowed to be\textit{ time-varying} during deployments.

\subsection{Online Environment Optimization} 

We propose an online environment optimization variant to overcome the above-mentioned issues by changing the obstacle region during agent movement. Different from its offline counterpart, it is interleaved with the deployment online, i.e., it changes the obstacle region based on instantaneous system states, and stays active until the end of the navigation. Specifically, define the starting region as the union of initial positions $\ccalS = \bigcup_{i=1,...,n}B(\bbs_i, r_i)$ and the destination region as that of destinations $\ccalD = \bigcup_{i=1,...,n}B(\bbd_i, r_i)$ such that $\ccalS \bigcap \ccalD = \emptyset$. The decentralized trajectory planner $\pi_a$ is given (i.e., each agent plans locally according to $\pi_a$). For scenarios with an empty obstacle region $\Delta = \emptyset$, the environment $\ccalE$ is ``well-formed'' such that all navigation tasks will be carried out successfully without collision. We denote these trajectories by $\{\bbp_i\}_{i=1}^n$. This is a reasonable assumption because an environment without obstacles is the best scenario for multi-agent navigation. For scenarios with obstacles, i.e., $\Delta \neq \emptyset$, these navigation tasks may fail and the online environment optimization handles the latter by changing $\Delta$ during the navigation procedure. Since $\Delta$ now changes continuously, we define the \emph{capacity} of the online environment optimization as the maximal changing rate of the obstacle area $\dot{\Delta}$, i.e., the maximal obstacle area that can be changed per time step. The following theorem formally shows the completeness.  
\begin{theorem}\label{thm:onlineCompleteness}
	\emph{Consider the multi-agent system with $n$ agents $\{A_i\}_{i=1}^n$. For the environment $\ccalE$ without obstacles, i.e., $\Delta = \emptyset$, let $\{\bbp_i(t)\}_{i=1}^n$ be $n$ collision-free trajectories of $\{A_i\}_{i=1}^n$ generated by the given trajectory planner $\pi_a$ and $\{\bbv_i(t)\}_{i=1}^n$ be the respective velocities along these trajectories. For the environment $\ccalE$ with an obstacle region $\Delta \subset \ccalE \setminus (\ccalS \cup \ccalD)$ and $\ccalW \setminus (\Delta \cup \ccalS \cup \ccalD) \ne \emptyset$, if the capacity of the online environment optimization satisfies
	\begin{align}\label{eq:onlineCompletenessCondition}
		\dot{\Delta} \ge 2 n \hat{r} \| \hat{\bbv} \|_2
	\end{align}
	where $\|\hat{\bbv}\|_2 = \max_{t \in [0,T]}\max_{i=1,...,n}\|\bbv_i(t)\|_2$ is the maximal norm of the velocities and $\hat{r} = \max_{i=1,\ldots,n} r_i$ is the maximal agent radius, the navigation tasks of all agents will be carried out successfully without collision.} 
\end{theorem}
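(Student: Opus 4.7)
The plan is to exhibit a valid obstacle-reshaping schedule under which every agent $A_i$ can execute precisely the reference trajectory $\bbp_i(t)$ that $\pi_a$ would generate in the obstacle-free environment. Because those reference trajectories are collision-free among the agents by assumption, and because each agent is required to follow its planned trajectory precisely, it suffices to keep the time-varying obstacle region $\Delta(t)$ disjoint from every agent's body for all $t \in [0,T]$, where $T$ is the horizon at which the last agent reaches its destination in the obstacle-free roll-out.

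The first ingredient is a swept-area bound. Over an infinitesimal interval $[t, t+dt]$, agent $A_i$, modelled as the disk $B(\bbp_i(t), r_i)$, enters new territory
\begin{equation*}
B(\bbp_i(t+dt), r_i) \setminus B(\bbp_i(t), r_i),
\end{equation*}
whose area is bounded by $2 r_i \|\bbv_i(t)\|_2\, dt$ (the standard estimate for a translating disk of diameter $2r_i$). Summing over the $n$ agents and dominating each factor by its maximum,
\begin{equation*}
\sum_{i=1}^{n} 2 r_i \|\bbv_i(t)\|_2 \,\le\, 2 n \hat{r} \|\hat{\bbv}\|_2,
\end{equation*}
which is precisely the worst-case rate at which obstacle mass must be removed from the agents' forward paths to keep their swept tubes clear.

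The capacity assumption \eqref{eq:onlineCompletenessCondition} matches this demand exactly, so I can schedule, at every instant and for every $i$, the removal of any obstacle mass lying in the newly swept strip $B(\bbp_i(t+dt), r_i) \setminus B(\bbp_i(t), r_i)$. The initial hypothesis $\Delta \subset \ccalE \setminus (\ccalS \cup \ccalD)$ makes the schedule consistent at $t=0$, and any obstacle mass that must be preserved can be re-deposited inside the non-empty free pocket $\ccalW \setminus (\Delta \cup \ccalS \cup \ccalD)$ guaranteed by hypothesis. With $\Delta(t) \cap \bigcup_{i} B(\bbp_i(t), r_i) = \emptyset$ enforced for all $t$, the agents follow $\{\bbp_i\}_{i=1}^n$ exactly and the navigation tasks complete without collision.

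The main obstacle I expect is two pieces of bookkeeping. First, when agents pass near each other the strips to be cleared may overlap, but this only \emph{reduces} the aggregate clearance demand, so \eqref{eq:onlineCompletenessCondition} remains sufficient; care is only needed to verify that the local per-agent clearance rate is still dominated by its contribution to the sum. Second, any relocated obstacle mass must not subsequently intrude on a later swept tube; since the union $\bigcup_{i} \bbp_i([t,T])$ has finite area and the free pocket $\ccalW \setminus (\Delta \cup \ccalS \cup \ccalD)$ is non-empty, the re-deposit site can always be chosen outside every future tube, reducing this point to a scheduling exercise rather than a capacity constraint.
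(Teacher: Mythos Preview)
Your argument is correct and rests on the same swept-area rate bound as the paper, but the route you take is somewhat more direct. The paper discretizes $[0,T]$ into $H$ frames, treats each frame as an offline sub-problem to which Theorem~\ref{thm:offlineCompleteness} applies, bounds the obstacle area that must change per frame by $2n\hat r\|\hat\bbv\|_2\,\delta t$, and then sends $H\to\infty$ so that the per-frame bound becomes the rate condition \eqref{eq:onlineCompletenessCondition}. You instead work directly at the infinitesimal level, clearing each agent's forward strip $B(\bbp_i(t+dt),r_i)\setminus B(\bbp_i(t),r_i)$ in real time; this avoids invoking the offline theorem as a lemma and makes the origin of the constant $2n\hat r\|\hat\bbv\|_2$ more transparent. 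What the paper's discretization buys is a slightly cleaner account of where displaced obstacle mass goes (it is parked in the original destination region $\ccalD$, which is obstacle-free by hypothesis and has the right area), whereas your re-deposit argument leans on the non-empty free pocket together with the observation that repeated relocations are themselves covered by the same rate budget. Both treatments leave this bookkeeping somewhat informal, but neither has a genuine gap: at every instant the area that must be moved to keep agent bodies obstacle-free is at most $2n\hat r\|\hat\bbv\|_2\,dt$, and the just-vacated region behind the agents always supplies enough room to receive it, so the capacity condition is indeed sufficient.
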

\begin{proof}
	We prove the theorem as follows. First, we slice the navigation procedure into $H$ time frames. Then, we optimize the obstacle region based on the agent positions in each frame, and show the completeness of individual frames. Lastly, we show the completeness of the entire multi-agent navigation solution by concatenating individual frames, and complete the proof by considering the whole process when the number of frames tends to infinity, i.e., $H \to \infty$. 	
	
	\noindent \textbf{Time slicing.} Let $T$ be the maximal operation time of trajectories $\{\bbp_i\}_{i=1}^n$ and $\{[(h-1)T/H, hT/H]\}_{h=1}^{H}$ the time frames. This yields intermediate positions $\big\{\bbp_i(hT/H)\big\}_{h=0}^{H}$ with $\bbp_i(0)=\bbs_i$ and $\bbp_i(T)=\bbd_i$ for $i=1,...,n$. We can re-formulate the navigation task into $H$ sub-navigation tasks, where the $h$th sub-task of agent $A_i$ is from $\bbp_i((h-1)T/H)$ to $\bbp_i(hT/H)$ and the operation time of the sub-navigation task is $\delta t = T/H$. In each frame, we first change the obstacle region based on the corresponding sub-navigation task and then navigate the agents until the next frame. 
	
	\noindent \textbf{Obstacle region optimization.} We consider each sub-navigation task separately and start from the first one. Assume the obstacle region $\Delta$ satisfies \vspace{-1mm}
	\begin{align}\label{eq:proofTheorem21}
		|\ccalW \setminus (\Delta \cup \ccalS \cup \ccalD)| > 2 n \hat{r} \|\hat{\bbv}\|_2 \delta t.
	\end{align}
	For the $1$st sub-navigation task, the starting region is $\ccalS^{(1)} = \bigcup_{i=1,\ldots,n} B(\bbp_i(0), r_i) = \ccalS$ and the destination region is $\ccalD^{(1)} = \bigcup_{i=1,\ldots,n} B(\bbp_i(T/H), r_i)$.  We optimize $\Delta$ based on $\ccalS^{(1)}$, $\ccalD^{(1)}$ and show the completeness of the $1$st sub-navigation task, which consists of two steps. First, we change $\Delta$ to $\widetilde{\Delta}$ such that $|\Delta| = |\widetilde{\Delta}|$ and $\widetilde{\Delta} \subset \ccalW \setminus (\ccalS^{(1)} \cup \ccalD^{(1)})$. This can be completed as follows. From the condition $\Delta \subset \ccalE \setminus (\ccalS \cup \ccalD)$ and $\ccalS^{(1)} = \ccalS$, there is no overlap between $\Delta$ and $\ccalS^{(1)}$. For any overlap region $\Delta \bigcap \ccalD^{(1)}$, we can change it to the obstacle-free region in $\ccalD$ because $\Delta \subset \ccalE \setminus (\ccalS \cup \ccalD)$ and $|\ccalD^{(1)}| = |\ccalD|$, and keep the other region in $\Delta$ unchanged. The resulting $\widetilde{\Delta}$ satisfies $|\widetilde{\Delta}| = |\Delta|$ and $\widetilde{\Delta} \subset \ccalW \setminus (\ccalS^{(1)} \cup \ccalD^{(1)})$. The changed area from $\Delta$ to $\widetilde{\Delta}$ is bounded by $|\ccalD^{(1)}|\le n \pi \hat{r}^2$. Second, we change $\widetilde{\Delta}$ to $\Delta^{(1)}$ such that the environment is ``well-formed'' w.r.t. the $1$st sub-navigation task. The initial position $\bbp_i(0)$ and the destination $\bbp_i(H/T)$ can be connected by a path $\bbp_i^{(1)}$ that follows the trajectory $\bbp_i$. Since $\|\hat{\bbv}\|_2$ is the maximal speed and $\delta t$ is the operation time, the area of $\bbp_i^{(1)}$ is bounded by $2 \hat{r} \| \hat{\bbv} \|_2 \delta t$. Since this holds for all $i=1,...,n$, we have $\sum_{i=1}^n |\bbp_i^{(1)}| \le 2 n \hat{r} \| \hat{\bbv} \|_2 \delta t$. From \eqref{eq:proofTheorem21}, $|\ccalS^{(1)}| = |\ccalS|$, $|\ccalD^{(1)}| = |\ccalD|$ and $|\widetilde{\Delta}| = |\Delta|$, we have \vspace{-1mm}
	\begin{align}\label{eq:proofTheorem23}
		|\ccalW \setminus (\widetilde{\Delta} \cup \ccalS^{(1)} \cup \ccalD^{(1)})| > 2 n \hat{r} \|\hat{\bbv}\|_2 \delta t.
	\end{align}		
	This implies the obstacle-free area in $\ccalE$ is larger than the area of $n$ paths $\{\bbp_i^{(1)}\}_{i=1}^n$. Following the proof of Theorem \ref{thm:offlineCompleteness}, we can optimize $\widetilde{\Delta}$ to $\Delta^{(1)}$ to guarantee the success of the $1$st sub-navigation task. The changed area from $\widetilde{\Delta}$ to $\Delta^{(1)}$ is bounded by $2 n \hat{r} \|\hat{\bbv}\|_2 \delta t - n \pi \hat{r}^2$ because the initial positions $\{\bbp_i(0)\}_{i=1}^n$ and destinations $\{\bbp_i(T/H)\}_{i=1}^n$ in $\{\bbp_i^{(1)}\}_{i=1}^n$ are collision-free from the first step, which dose not require any further change of the obstacle region. The total changed area from $\Delta$ to $\Delta^{(1)}$ can be bounded as \vspace{-1mm}
	\begin{align}\label{eq:proofTheorem24}
		\frac{\big|(\Delta \bigcup \Delta^{(1)}) \setminus (\Delta \bigcap \Delta^{(1)})\big|}{2} \le 2 n \hat{r} \|\hat{\bbv}\|_2 \delta t.
	\end{align}
    
    From \eqref{eq:proofTheorem23}, $|\Delta^{(1)}| = |\widetilde{\Delta}|$ and $\Delta^{(1)} \subset \ccalW \setminus (\ccalS^{(1)} \cup \ccalD^{(1)})$, the optimized $\Delta^{(1)}$ satisfies $|\ccalW \setminus (\Delta^{(1)} \cup \ccalS^{(1)} \cup \ccalD^{(1)})| \ge 2 n \hat{r} \|\hat{\bbv}\|_2 \delta t$, which recovers the assumption in \eqref{eq:proofTheorem21}. Therefore, we can repeat the above process iteratively and guarantee the success of $H$ sub-navigation tasks. The entire navigation task is guaranteed success by concatenating these sub-tasks. 
	
	\noindent \textbf{Completeness.} When $H \to \infty$, we have $\delta t \to 0$. Since the environment optimization time is same as the agent operation time at each sub-navigation task, the obstacle region and the agents can be considered taking actions simultaneously when $\delta t \to 0$. The initial environment condition in \eqref{eq:proofTheorem21} becomes \vspace{-1mm}
	\begin{align}\label{eq:proofTheorem27}
		\lim_{\delta t \to 0}| \ccalW \setminus (\Delta \cup \ccalS \cup \ccalD)| > 2 n \hat{r} \|\hat{\bbv}\|_2 \delta t \to 0.
	\end{align}	 
	which is satisfied from the condition $\ccalW \setminus (\Delta \cup \ccalS \cup \ccalD) \ne \emptyset$. The changed area of the obstacle region in \eqref{eq:proofTheorem24} becomes \vspace{-1mm}
	\begin{align}\label{eq:proofTheorem28}
		\!\lim_{\delta t \to 0}\!\frac{\big|(\Delta^{(h)} \!\bigcup\! \Delta^{(h+1)}) \!\setminus\! (\Delta^{(h)} \!\bigcap\! \Delta^{(h+1)})\big|}{2 \delta t} \!\le\! 2 n \hat{r} \|\hat{\bbv}\|_2.
	\end{align}
    That is, if the capacity of the online environment optimization is higher than $2 n \hat{r} \|\hat{\bbv}\|_2$, i.e., $\dot{\Delta} \ge 2 n \hat{r} \| \hat{\bbv} \|_2$, the navigation task can be carried out successfully without collision, which completes the proof.	
\end{proof}

Theorem \ref{thm:onlineCompleteness} states online environment optimization guarantees the success of all navigation tasks as well as its offline counterpart. The result is established under a mild condition on the changing rate of the obstacle region [cf. \eqref{eq:onlineCompletenessCondition}] rather than the initial obstacle-free area [cf. \eqref{eq:offlineCompletenessCondition}]. This stems from the fact that online environment optimization changes the obstacle region \textit{while the agents are moving}, which improves navigation performance only if timely actions can be taken based on instantaneous system states. The completeness analysis in Theorems \ref{thm:offlineCompleteness}-\ref{thm:onlineCompleteness} provide theoretical guarantees, demonstrating the applicability of the proposed idea.\footnote{The theoretical analysis in Section \ref{sec:Completeness} is general, i.e., the obstacle region can be of any shape and move continuously.}

\section{METHODOLOGY}\label{sec:Methodology}

In this section, we formulate Problem \ref{def:environmenProblem} mathematically and solve the latter by leveraging reinforcement learning. Specifically, consider the obstacle region $\Delta$ comprising $m$ obstacles $\ccalO = \{O_1,...,O_m\}$, which can be of any shape and are located at positions $\bbO = [\bbo_1,...,\bbo_m]$. These obstacles can change positions to improve system performance (i.e., efficiency of agent navigation and cost of obstacle motion). Denote by $\bbX_o = [\bbx_{o1},...,\bbx_{om}]$ the obstacle states, $\bbU_o = [\bbu_{o1},...,\bbu_{om}]$ the obstacle actions, $\bbX_a = [\bbx_{a1},...,\bbx_{an}]$ the agent states and $\bbU_a = [\bbu_{a1},...,\bbu_{an}]$ the agent actions. Let $\pi_o(\bbU_o | \bbX_o, \bbX_a)$ be a policy that controls the obstacles, which is a distribution of $\bbU_o$ conditioned on $\bbX_o$, $\bbX_a$. The objective function $f(\bbS, \bbD, \pi_a | \bbO, \pi_o)$ measures the multi-agent navigation performance, while the cost function $\{g_i(\bbO, \pi_o | \bbS, \bbD, \pi_a)\}_{i=1}^m$ indicates environment restrictions on the obstacles. The goal is to find a policy $\pi_o$ that maximizes the system performance $f(\bbS, \bbD, \pi_a | \bbO, \pi_o)$ regularized by the obstacle costs $\{g_i(\bbO, \pi_o | \bbS, \bbD, \pi_a)\}_{i=1}^m$. With the obstacle action space $\ccalU_o$, we formulate Problem \ref{def:environmenProblem} as \vspace{-2mm}
\begin{align}\label{eq:environmentProblemMath}
	&\argmax_{\pi_o}~~ f(\bbS, \bbD, \pi_a | \bbO, \pi_o) - \sum_{i=1}^m \beta_i g_i(\bbO, \pi_o | \bbS, \bbD, \pi_a)\nonumber\\
	&~~~~\text{s.t.}~~~~~~ \pi_{o}(\bbU_{o}|\bbX_o,\bbX_a) \in \ccalU_{o} 
\end{align}
Solving \eqref{eq:environmentProblemMath} is difficult and there are mainly three challenges: 

\begin{enumerate}[(i)]
	
	\item The objective function $f(\bbS, \bbD, \pi_a | \bbO, \pi_o)$ and the cost functions $\{g_i(\bbO, \pi_o | \bbS, \bbD, \pi_a)\}_{i=1}^m$ may be complex, inaccurate or unknown, precluding the application of conventional model-based methods and leading to poor performance of heuristic methods.
	
	\item The policy $\pi_o(\bbU_o | \bbX_o, \bbX_a)$ is an arbitrary mapping from the state space to the action space, which can take any function form and is infinitely dimensional. 
	
	\item The obstacle actions can be discrete or continuous and the action space $\ccalU_o$ can be non-convex, resulting in complicated constraints on the feasible solution.
	
\end{enumerate}

Due to the aforementioned challenges, we develop a model-free policy gradient-based reinforcement learning (RL) method to solve this problem.

\subsection{Reinforcement Learning}

We re-formulate problem \eqref{eq:environmentProblemMath} within an RL setting, and begin by defining a Markov decision process. At each time $t$, the obstacles $\ccalO$ and agents $\ccalA$ observe the states $\bbX_o^{(t)}$, $\bbX_a^{(t)}$ and take the actions $\bbU_o^{(t)}$, $\bbU_a^{(t)}$ with the policy $\pi_o$ and trajectory planner $\pi_a$. The actions $\bbU_o^{(t)}$, $\bbU_a^{(t)}$ amend the states $\bbX_o^{(t)}$, $\bbX_a^{(t)}$ based on the transition probability function $P(\bbX_o^{(t+1)},\bbX_a^{(t+1)}|\bbX_o^{(t)},\bbX_a^{(t)},\bbU_o^{(t)},\bbU_a^{(t)})$, which is a distribution of the states conditioned on the previous states and actions. Let $r_{ai}(\bbX_o^{(t)},\bbX_a^{(t)})$ be the reward function of agent $A_i$, which represents instantaneous navigation performance at time $t$. The reward function of obstacle $O_j$ comprises two components: (i) the multi-agent system reward and (ii) the obstacle individual reward, i.e., \vspace{-2mm} 
\begin{align}\label{eq:obstacleRewardFunction}
	&r_{oj} = \frac{1}{n}\sum_{i=1}^n r_{ai} + \beta_j r_{j, \mathrm{local}},~\forall~j=1,...,m
\end{align}
where $r_{ai}$, $r_{j, \mathrm{local}}$ are concise notations of $r_{ai}(\bbX_o,\bbX_a)$, $r_{j, \mathrm{local}} (\bbX_o,\bbX_a)$ and $\beta_j$ is the regularization parameter. 
The first term is the team reward that represents the multi-agent system performance and is shared across all obstacles. The second term is the individual reward that corresponds to environment restrictions on obstacle $O_j$ and may be different among obstacles, such as limitations of moving velocity and distance. 
This reward definition is novel that differs from common RL scenarios. In particular, the obstacle reward is a combination of a global reward with a local reward. The former is the main goal of all obstacles, while the latter is the imposed penalty of a single obstacle. The regularization parameters $\{\beta_j\}_{j=1}^m \in [0, 1]^m$ weigh the environment restriction on the navigation performance. The total reward of the obstacles is defined as $r_{o}= \sum_{j=1}^m r_{oj}$. Let $\gamma \in [0,1]$ be the discount factor accounting for the future rewards and the expected discounted reward can be represented as \vspace{-2mm}
\begin{align}\label{eq:expectedCost}
	&R(\bbO, \bbS, \bbD | \pi_o, \pi_a) = \mathbb{E} \Big[\sum_{t = 0}^\infty \gamma^{t} r_{o}^{(t)}\Big] \\
	& = \mathbb{E}\Big[\sum_{t=0}^\infty \gamma^t \sum_{j=1}^m\!\sum_{i=1}^n\! \frac{r_{ai}^{(t)}}{n}\Big] \!+\! \sum_{j=1}^m \beta_j \mathbb{E}\Big[\sum_{t=0}^\infty \gamma^t r_{j, \rm local}^{(t)}\Big] \nonumber
\end{align}
where $\bbO$, $\bbS$ and $\bbD$ are initial positions and destinations that determine initial states $\bbX_o^{(0)}$ and $\bbX_a^{(0)}$, and the expectation $\mathbb{E}[\cdot]$ is w.r.t. the action policy and state transition probability. By comparing \eqref{eq:expectedCost} with \eqref{eq:environmentProblemMath}, we see equivalent representations of the objective and cost functions in the RL domain. The policy $\pi_o$ is modeled by an information processing architecture $\bbPhi(\bbX_o, \bbX_a, \bbtheta)$ with parameters $\bbtheta$, which are updated through gradient ascent using policy gradients~\cite{sutton1999policy}. The policy distribution is chosen w.r.t. the action space $\ccalU_o$. 

\subsection{Information Processing Architecture}

The above framework covers a variety of environment optimization scenarios and different information processing architectures can be integrated to solve different variants. We illustrate this fact by analyzing two representative scenarios: (i) offline optimization in discrete environments and (ii) online optimization in continuous environments, for which we use convolutional neural networks (CNNs) and graph neural networks (GNNs), respectively.

\noindent \textbf{CNNs for offline discrete settings.} In this setting, we first optimize the obstacles' positions and then navigate the agents. Since computations are performed offline, we collect the states of all obstacles $\bbX_o$ and agents $\bbX_a$ apriori, which allows for centralized information processing solutions (e.g., CNNs). CNNs leverage convolutional filters to extract features from image signals and have found wide applications in computer vision \cite{browne2003convolutional, kumra2017robotic, gu2018recent}. In the discrete environment, the system states can be represented by matrices and the latter are equivalent to image signals. This motivates to consider CNNs for policy parameterization.

\noindent \textbf{GNNs for online continuous settings.} For online optimization in continuous environments, obstacle positions change while agents move, i.e., the obstacles take actions instantaneously. In large-scale systems with real-time communication and computation constraints, obstacles may not be able to collect the states of all other obstacles / agents and centralized solutions may not be applicable. 
This requires a \emph{decentralized architecture} that can be implemented with local neighborhood information. GNNs are inherently decentralizable and are, thus, a suitable candidate. 

GNNs are layered architectures that leverage a message passing mechanism to extract features from graph signals \cite{scarselli2008graph, velivckovic2018graph, gao2021stochastic}. At each layer $\ell$, let $\bbX_{\ell-1}$ be the input signal. The output signal is generated with the message aggregation function $\ccalF_{(\ell-1)m}$ and the feature update function $\ccalF_{(\ell-1) u}$ as
\begin{align}
	[\bbX_{\ell}]_i \!=\! \ccalF_{\ell u}\Big( [\bbX_{\ell\!-\!1}]_i,\!\! \sum_{j \in \ccalN_i}\!\! \ccalF_{\ell m}\big( [\bbX_{\ell\!-\!1}]_i, [\bbX_{\ell\!-\!1}]_j, [\bbE]_{ij} \big)\! \Big)\nonumber
\end{align}
where $[\bbX_\ell]_i$ is the signal value at node $i$, $\ccalN_i$ are the neighboring nodes within the communication radius, $\bbE$ is the adjacency matrix, and $\ccalF_{\ell m}$, $\ccalF_{\ell u}$ have learnable parameters $\bbtheta_{\ell m}$, $\bbtheta_{\ell u}$. The output signal is computed with local neighborhood information only, and each node can make decisions based on its own output value; 
hence, allowing for decentralized implementation \cite{tolstaya2020learning, kortvelesy2021modgnn, gao2022wide, li2020graph}.

\section{EXPERIMENTS}

\begin{figure}%
	\centering
	\begin{subfigure}{0.48\columnwidth}
		\includegraphics[width=1.05\linewidth, height = 0.8\linewidth]{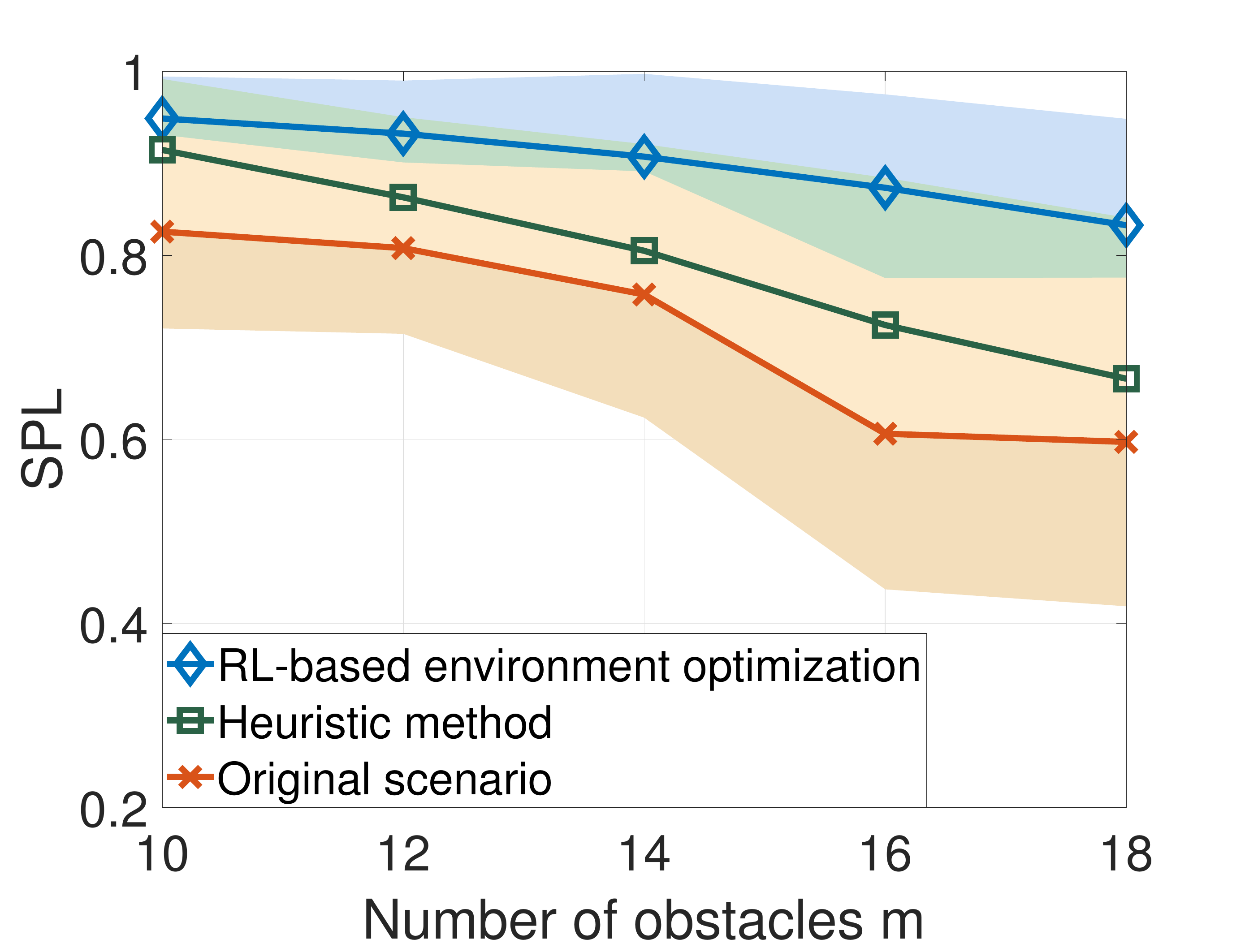}%
		\caption{}%
		\label{subfig_offlineSPL}%
	\end{subfigure}\hfill\hfill%
	\begin{subfigure}{0.48\columnwidth}
		\includegraphics[width=1.05\linewidth,height = 0.8\linewidth]{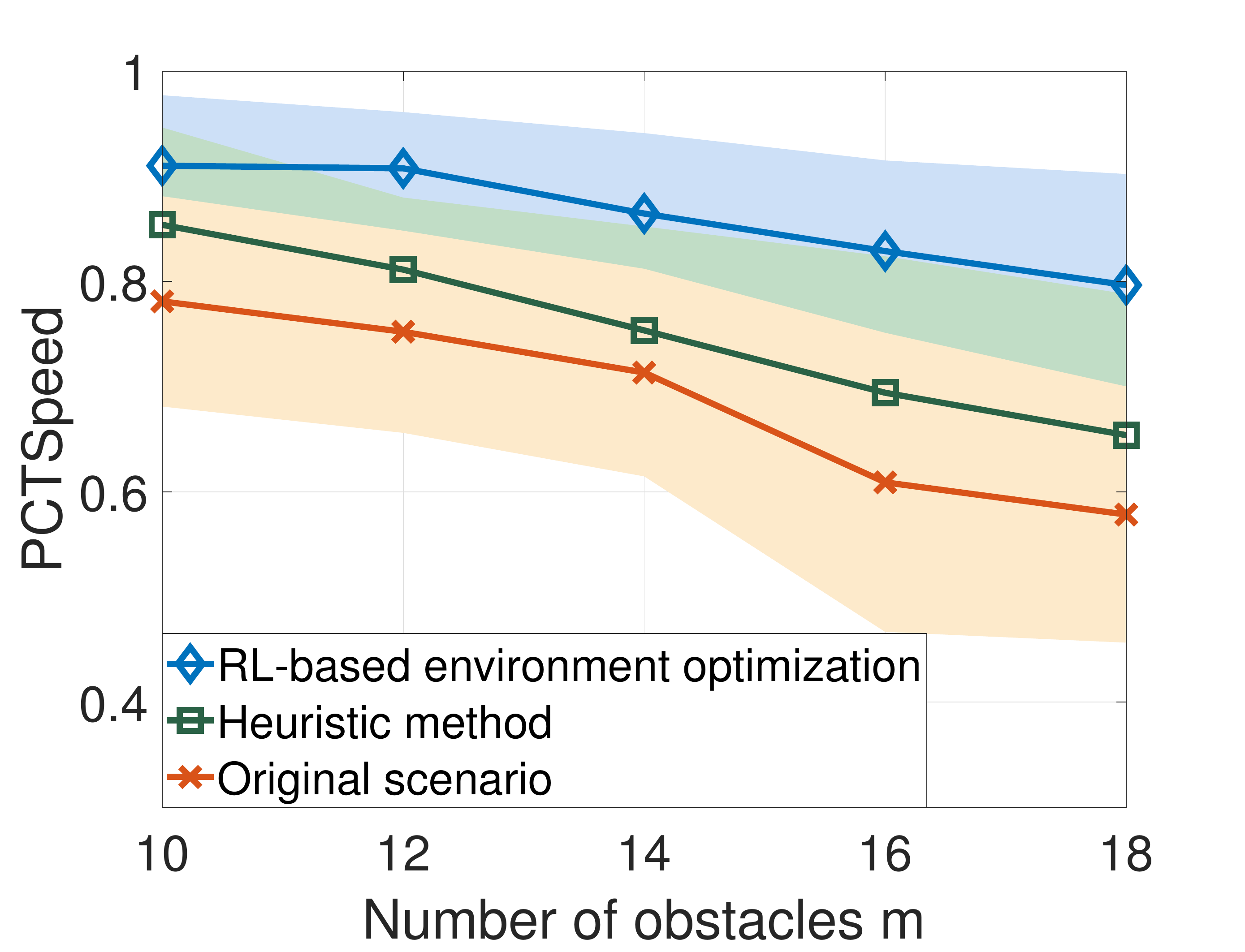}%
		\caption{}%
		\label{subfig_offlinePCT}%
	\end{subfigure}
	\caption{Performance of offline environment optimization compared to the baselines. Results are averaged over $20$ trials and the shaded area shows the std. dev. The RL system is trained on $10$ obstacles and tested on $10$ to $18$ obstacles. (a) SPL ($1$ is best). (b) PCTSpeed.}\label{fig:offline}\vspace{-6mm}
\end{figure}

We consider two navigation scenarios, one in which we perform offline optimization with discrete obstacle motion, and another in which we consider online optimization with continuous obstacle motion. The obstacles have rectangular bodies and the agents have circular bodies. The given agent trajectory planner $\pi_a$ is Reciprocal Velocity Obstacles (RVO) method \cite{van2008reciprocal}.\footnote{The proposed environment optimization approach can be applied in the same manner, irrespective of the chosen agent navigation algorithm.} Two metrics are used: Success weighted by Path Length (SPL) \cite{anderson2018evaluation} and the percentage of the maximal speed (PCTSpeed). The former is a stringent measure combining the success rate and the path length, while the latter is the ratio of the average speed to the maximal one. All results are averaged over $20$ trials with random initial configurations. 

\subsection{Offline Optimization with Discrete Environment}\label{subsec:offline}

We consider a grid environment of size $8 \times 8$ with $10$ obstacles and $4$ agents, which are distributed randomly without overlap. The maximal agent / obstacle velocity is $0.1$ per time step and the maximal time step is $500$.  

\noindent \textbf{Setup.} The environment is modeled as an occupancy grid map. An agent's location is modeled by a one-hot in a matrix of the same dimension. At each step, the policy considers one of the obstacles and moves it to one of the adjacent grid cells. This repeats for $m$ obstacles, referred to as a round, and an episode ends if the maximal round has been reached.

\begin{figure}%
	\centering
	\begin{subfigure}{0.48\columnwidth}
		\includegraphics[width=1.1\linewidth, height = 0.8\linewidth]{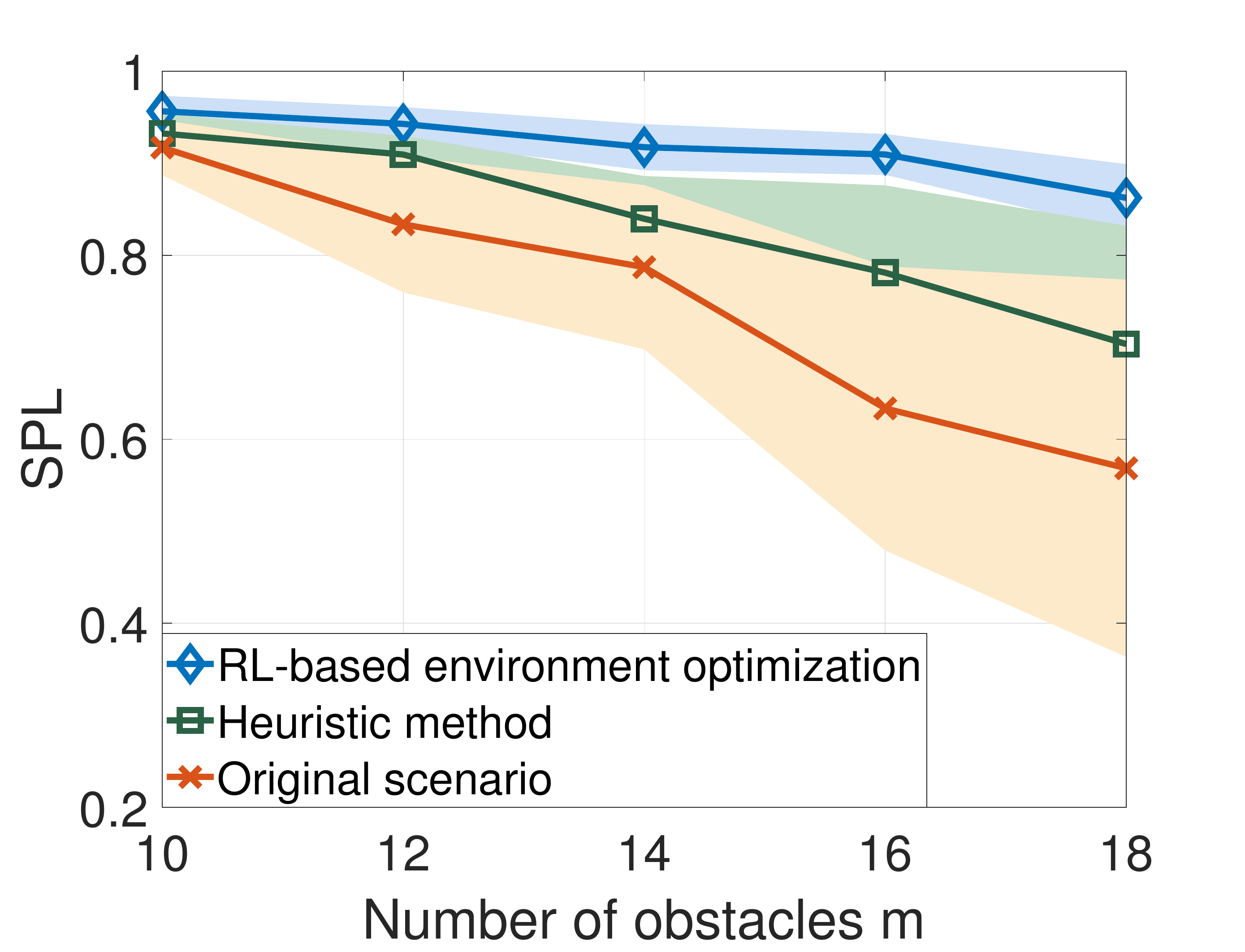}%
		\caption{}%
		\label{subfiga_online_SPL}%
	\end{subfigure}\hfill\hfill%
	\begin{subfigure}{0.48\columnwidth}
		\includegraphics[width=1.05\linewidth,height = 0.8\linewidth]{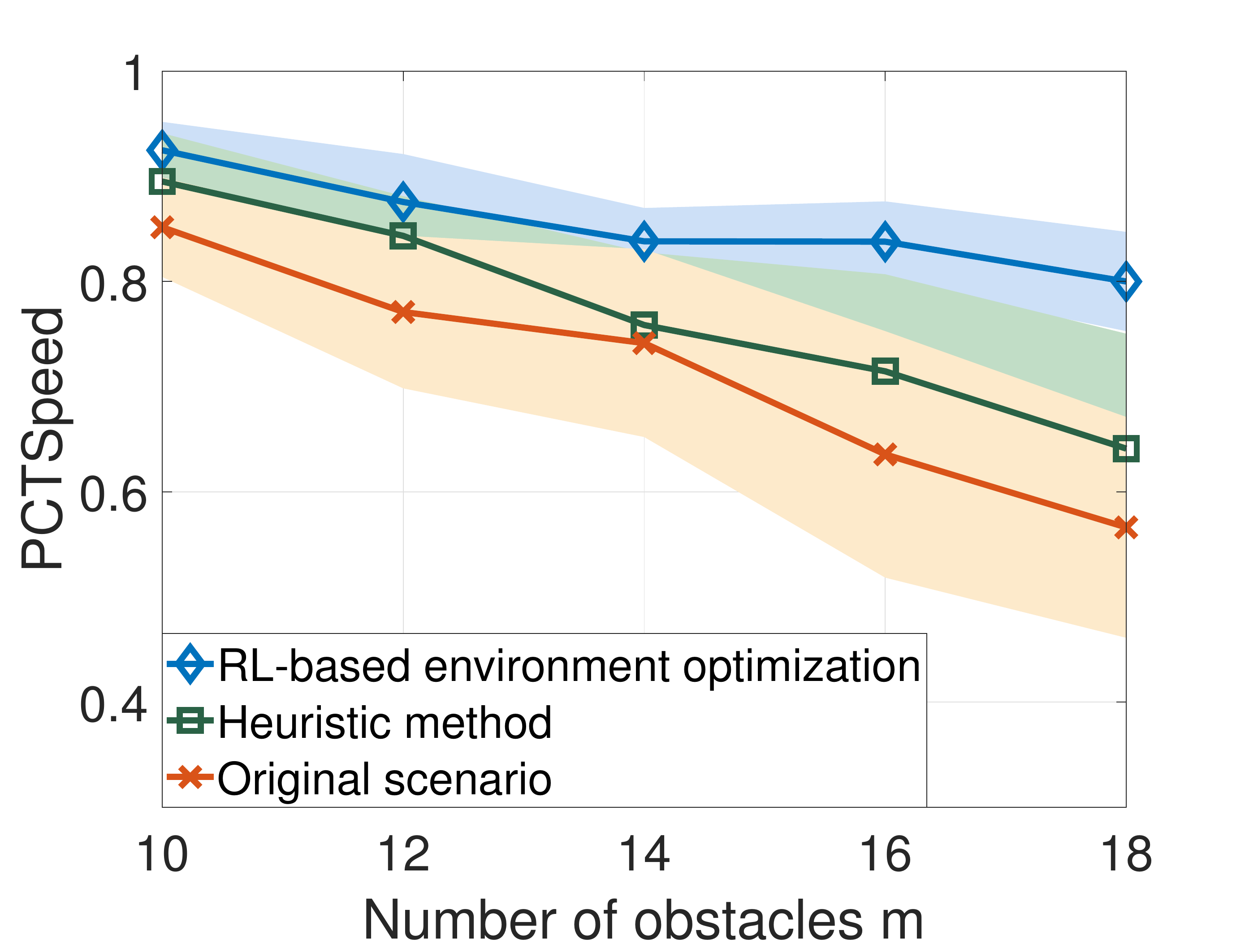}%
		\caption{}%
		\label{subfigb_online_PCTSpeed}%
	\end{subfigure}
	\caption{Performance of online environment optimization compared to the baselines. Results are averaged over $20$ trails and the shaded area shows the std. dev. The RL system is trained on $10$ obstacles and tested on $10$ to $18$ obstacles. (a) SPL ($1$ is best). (b) PCTSpeed.}\label{fig:online}\vspace{-6mm}
\end{figure}

\noindent \textbf{Training.} The objective is to make agents reach destinations quickly while avoiding collision. The team reward is the sum of the PCTSpeed and the ratio of the shortest distance to the traveled distance, while the local reward is the collision penalty of individual obstacle [cf. \eqref{eq:obstacleRewardFunction}]. We parameterize the policy with a CNN of $4$ layers, each containing $25$ features with kernel size $2 \times 2$, and conduct training with PPO \cite{schulman2017proximal}.

\noindent \textbf{Baseline.} Since exhaustive search methods are intractable for our problem, we develop a strong heuristic method to act as a baseline: At each step, one of the obstacles computes the shortest paths of all agents, checks whether it blocks any of these paths, and moves away randomly if so. This repeats for $m$ obstacles, referred to as a round, and the method ends if the maximal round is reached. 

\noindent \textbf{Performance.} We train our model on $10$ obstacles and test on $10$ to $18$ obstacles, which varies obstacle density from $10\%$ to $30\%$. Fig. \ref{fig:offline} shows the results. The proposed approach consistently outperforms baselines with the highest SPL / PCTSpeed and the lowest variance. The heuristic method takes the second place and the original scenario (without any environment modification) performs worst. As we generalize to higher obstacle densities, all methods degrade as expected. However, our approach maintains a satisfactory performance due to the CNN's cabability for generalization. Fig. \ref{fig:offlineDemo} shows an example of how the proposed approach circumvents the dead-locks by optimizing the obstacle layout. Moreover, it improves the path efficiency such that all agents find collision-free trajectories close to their shortest paths.

\subsection{Online Optimization with Continuous Environment}\label{subsec:online}

We proceed to a continuous environment. The agents are initialized randomly in an arbitrarily defined starting region and towards destinations in an arbitrarily defined goal region.

\noindent \textbf{Setup.} The agents and obstacles are modeled by positions $\{\bbp_{a,i}\}_{i=1}^n$, $\{\bbp_{o,j}\}_{j=1}^m$ and velocities $\{\bbv_{a,i}\}_{i=1}^n$, $\{\bbv_{o,j}\}_{j=1}^m$. At each step, each obstacle has a local policy that generates the desired velocity with neighborhood information and we integrate an acceleration-constrained mechanism for position changes. An episode ends if all agents reach destinations or the episode times out. The maximal acceleration is $0.1$, the communication radius is $2$ and the episode time is $500$. 

\begin{figure}%
	\centering
	\begin{subfigure}{0.49\columnwidth}
		\includegraphics[width=1.1\linewidth,height = 0.8\linewidth]{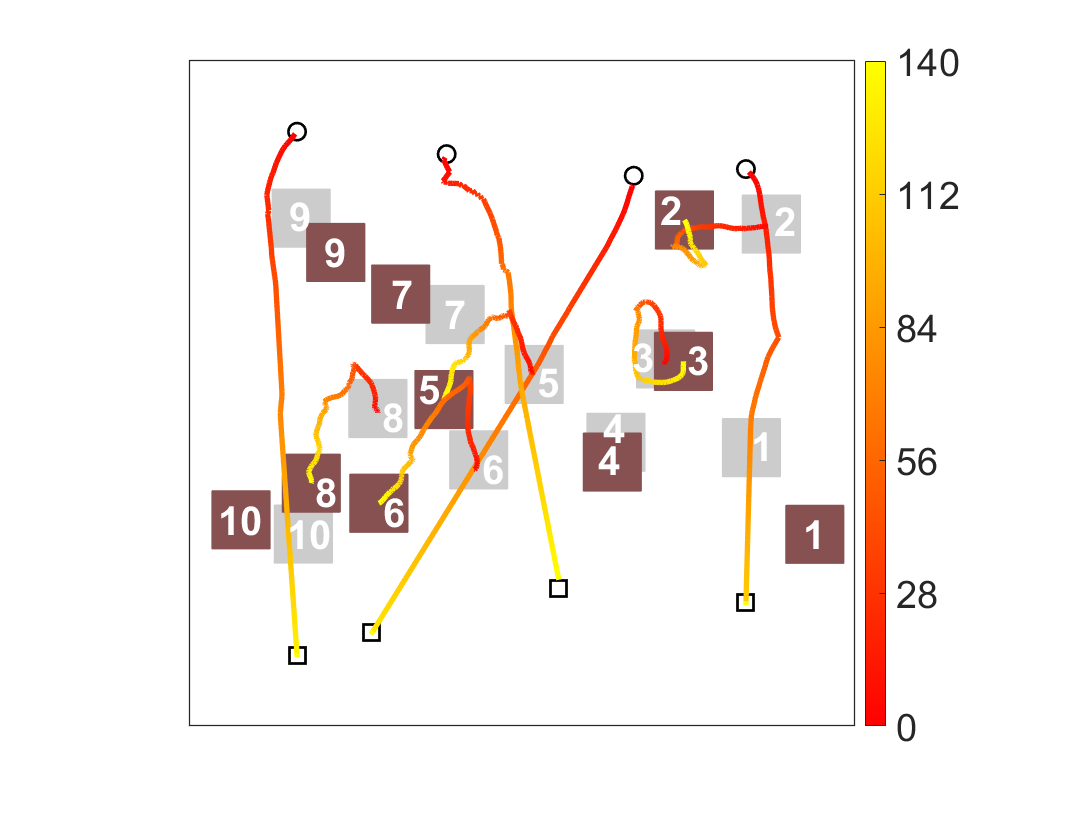}%
		\caption{}%
		\label{subfig_online_demo}%
	\end{subfigure}\hfill\hfill%
	\begin{subfigure}{0.48\columnwidth}
		\includegraphics[width=1.05\linewidth,height = 0.8\linewidth]{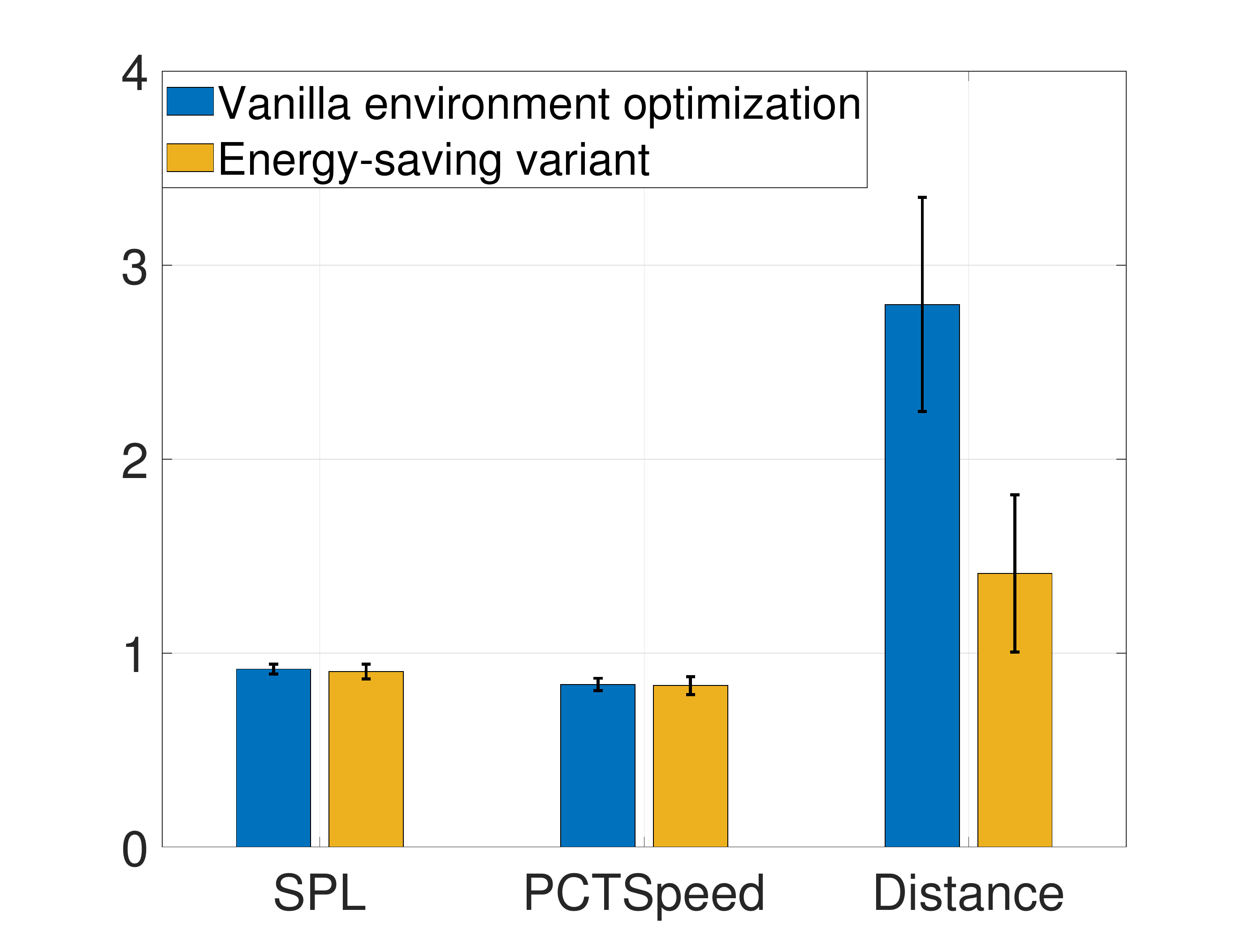}%
		\caption{}%
		\label{subfig_energy_saving}%
	\end{subfigure}%
	\caption{(a) Example of online environment optimization. Circles are the initial positions and squares are the destinations. The grey and brown rectangles are the obstacles before and after environment optimization, and are numbered for exposition. The red-to-yellow lines are trajectories of agents and $5$ example obstacles, and the color bar represents the time scale, showing that no agent-agent nor agent-obstacle collisions occur. (b) Performance of the vanilla environment optimization and the energy-saving variant. The results are averaged over $20$ trials and the error bar shows the std. dev. }\label{fig:onlineDemo}\vspace{-6mm}
\end{figure}

\noindent \textbf{Training.} The team reward in \eqref{eq:obstacleRewardFunction} guides the agents to their destinations as quickly as possible and is defined as \vspace{-2mm}
\begin{align}
	r_{a,i}^{(t)} = \Big(\frac{\bbp_{i}^{(t)}-\bbd_i}{\|\bbp_{i}^{(t)}-\bbd_i\|_2} \cdot \frac{\bbv_{i}^{(t)}}{\|\bbv_{i}^{(t)}\|_2} \Big) \|\bbv_{i}^{(t)}\|_2
\end{align}
at time step $t$, which rewards fast movements towards the destination and penalizes moving away from it. The local reward is the collision penalty. We parameterize the policy with a single-layer GNN. The message aggregation function and feature update function are multi-layer perceptrons (MLPs), and we train the model using PPO.

\noindent \textbf{Performance.} The results are shown in Fig. \ref{fig:online}. The proposed approach exhibits the best performance for both metrics and maintains a good performance for large scenarios. We attribute the latter to the fact that GNNs exploit topological information for feature extraction and are scalable to larger graphs. The heuristic method performs worse but comparably for small number of obstacles, while degrading with the increasing of obstacles. It is note-worthy that the heuristic method is centralized because it requires computing shortest paths of all agents, and hence is not applicable for online optimization and considered here as a benchmark value only for reference. Fig. \ref{subfig_online_demo} shows the moving trajectories of the agents and example obstacles. We see that the obstacles make way for the agents to facilitate navigation such that the agents find trajectories close to their shortest paths.

\noindent \textbf{Optimization criteria.} We show that the objective can capture various criteria. Here, we test whether our approach can learn to also reduce the traveled distance of obstacles. The team reward remains the same, while the local reward becomes the sum of the collision \textit{and speed penalties}. Fig. \ref{subfig_energy_saving} shows the results with $14$ obstacles averaged over $20$ trials. This energy-saving variant achieves a comparable performance (slightly worse) across SPL and PCTSpeed, but saves half of the traveled distance, indicating an effective trade-off between performance and energy expenditure.   

\section{CONCLUSION}

In this paper, we formulated the problem of environment optimization for multi-agent navigation and developed the offline and online solutions. We conducted the completeness analysis for both variants and provided conditions under which all navigation tasks are guaranteed success. We developed a reinforcement learning-based method to solve the problem in a model-free manner, and integrated two different information processing architectures (i.e., CNNs and GNNs) for policy parameterization w.r.t. specific implementation requirements. The developed method is able to generalize to unseen test instances, captures multiple optimization criteria, and allows for a decentralized implementation. Numerical results show superior performance corroborating theoretical findings, i.e., that environment optimization improves the navigation efficiency by optimizing over obstacle regions.


\clearpage





{\small
	\newpage
	\bibliographystyle{IEEEtran}
	\bibliography{myIEEEabrv,biblioOp,AP_bib}
}




\end{document}